%% file: acl_latex.tex
\documentclass[11pt]{article}

\usepackage[final]{acl}

\usepackage{times}
\usepackage{latexsym}
\usepackage{enumitem}
\usepackage{comment}
\usepackage[T1]{fontenc}

\usepackage[utf8]{inputenc}

\usepackage{microtype}

\usepackage{inconsolata}

\usepackage{graphicx}
\usepackage{amsmath}
\usepackage{booktabs}
\usepackage{multirow}
\usepackage{array}
\usepackage{placeins}
\usepackage{colortbl}
\usepackage[prependcaption,textsize=scriptsize,color=pink]{todonotes}

\usepackage{amsfonts,amssymb,amsthm}
\newtheorem{theorem}{\textbf{Theorem}}

\DeclareRobustCommand{\name}{\textsc{ReContext}}

\title{ReContext: Recursive Evidence Replay as LLM Harness for \\Long-Context Reasoning}

\author{{\normalfont Yanjun Zhao\footnotemark[1],} {\normalfont Ruizhong Qiu\footnotemark[1],} {\normalfont Tianxin Wei\footnotemark[1],} {\normalfont Yuanchen Bei,} {\normalfont Zhining Liu,}\\
{Lingjie Chen},
{Ismini Lourentzou}, {Hanghang Tong}, {Jingrui He}$^{\dag}$\\
\vspace{-0.8em} \\
University of Illinois Urbana-Champaign\\
\texttt{ \{yanjunzh, jingrui\}@illinois.edu}
}

\begin{document}
\maketitle

{
\renewcommand{\thefootnote}{\fnsymbol{footnote}}
\footnotetext[1]{Equal contribution. $^\dag$Corresponding author.}
}

\begin{abstract}
Understanding and reasoning over long contexts has become a key requirement for deploying large language models (LLMs) in realistic applications. Although recent LLMs support increasingly long context windows, they often fail to use relevant evidence that is already present in the input, revealing a gap between context access and effective context utilization. In this work, we propose \textbf{Recursive Evidence Replay as LLM Harness for Long-Context Reasoning} (\name), a training-free inference method for improving long-context reasoning. \name{} uses model-internal relevance signals to construct a query-conditioned evidence pool and replays it before final generation while preserving the full original context. This recursive selection process separates evidence organization from answer generation without training, external memory, or context pruning. We also provide a theoretical analysis based on associative memory, which characterizes the context as a memory store, the question as a retrieval cue, attention as cue-trace association, and replay as trace reactivation. Experiments on eight long-context datasets with 128K context length show that \name{} consistently improves evidence utilization across Qwen3-4B, Qwen3-8B, and Llama3-8B, achieving the \textbf{best average rank} on all three backbones. Code is available at https://github.com/Yanjun-Zhao/ReContext.
\end{abstract}

\begin{figure}[t]
    \centering
    \includegraphics[width=0.9\columnwidth]{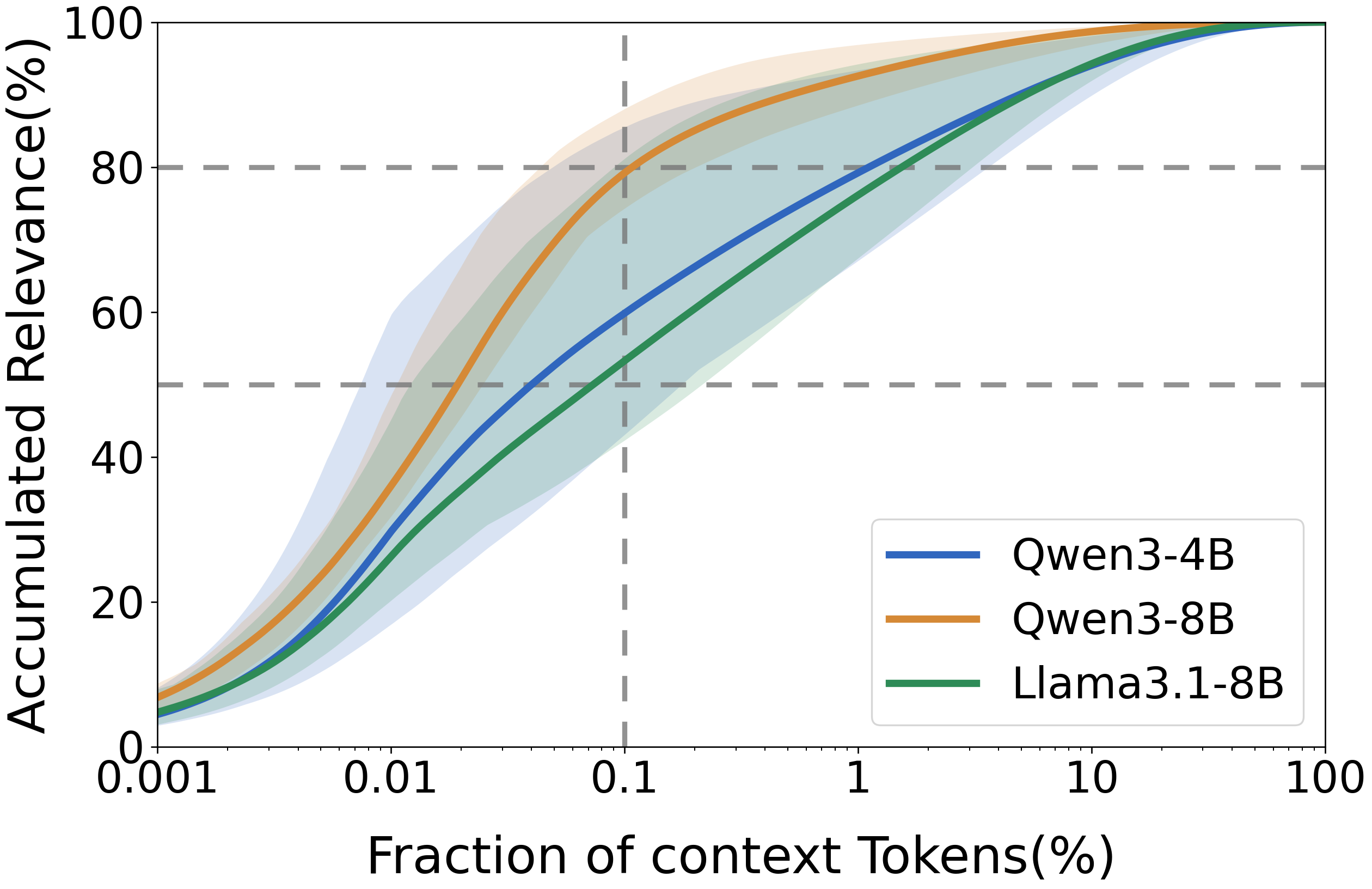}
    \caption{\textbf{Top 0.1\% of context tokens already accounts for about 50\% / 80\% accumulated relevance score across three LLMs, corresponding to only 128 tokens in a 128K-token context.} This figure ranks all context tokens by their relevance scores with respect to the question and shows how much accumulated relevance score is covered by the top-ranked tokens. Each curve represents the mean trend over eight datasets, and the shaded region shows the variance across datasets.}
    \label{fig:teaser}
\end{figure}

\input{sections/introduction}
\input{sections/related_work}

\input{sections/method}
\input{sections/experiment}
\input{sections/conclusion}

\section*{Limitations}

\name{} requires access to model-internal relevance signals, which limits its
direct use with closed-source APIs that do not expose attention or similar
scoring information. The method also adds a read-and-replay stage, so inference
latency is higher than direct full-context decoding, although it remains
training-free and does not maintain a persistent external memory. 

\bibliography{custom}

\clearpage
\appendix

\input{sections/appendix}

\end{document}

%% file: sections/introduction.tex
\section{Introduction}
\label{Introduction}

Long-context large language models (LLMs) can now place entire documents, multi-document collections, and extended dialogues inside a single prompt. However, longer context windows do not guarantee reliable long-context reasoning. A recurring failure mode is that the evidence needed to answer a question is already present in the input, but the model does not consistently use it during generation~\cite{yen2025helmet, ye2026dysco, bei2026memgallerybenchmarkingmultimodallongterm}
This suggests that the bottleneck lies not only in \emph{context access}, but also in \emph{context harnessing}: we need a mechanism that dynamically manages long contexts during reasoning by continuously identifying, organizing, and updating the information most relevant to the current stage, thereby enabling more grounded and efficient long-context reasoning\looseness-1.

\begin{figure*}[tbp]
  \centering
  \includegraphics[width=0.99\linewidth]{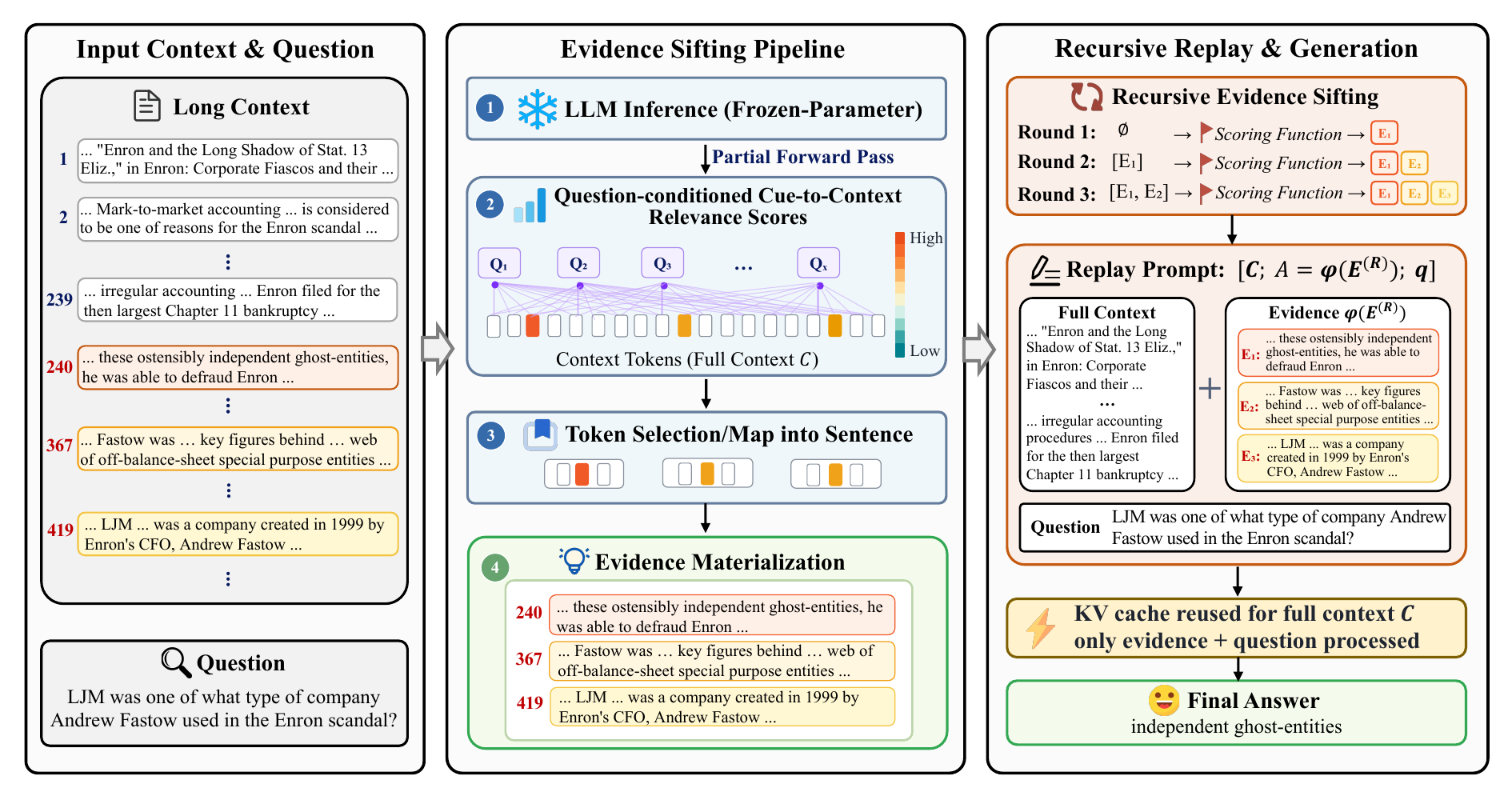}
  \vspace{-0.3cm}
  \caption{\textbf{Overview of \name{}.} \name{} identifies question-relevant evidence from a long context using internal LLM relevance signals, materializes selected tokens into grounded evidence spans, and recursively replays the resulting evidence before final generation while preserving access to the full context.}
  
  \label{fig:framework}
\end{figure*}

Standard long-context prompting requires a model to read the context and answer the query in a single inference pass. As the context grows, the evidence must compete with increasing irrelevant information, making it harder for the model to ground its answers and leading to errors or hallucinations~\citep{li2024snapkvllmknowslooking,liu2025selfelicit}. Recent efforts address this problem from different angles: attention intervention methods modify low-level model behavior~\citep{li2024snapkvllmknowslooking,tang2024questqueryawaresparsityefficient,ye2026dysco}, making them invasive because they require changing the backbone forward or decoding logic. Retrieval and external-memory methods add retrieval systems~\citep{lewis2021retrieval,xu2025amem}, while compression methods shorten the effective input~\citep{jiang2023llmlingua,jiang2024longllmlingua,zhao2025dac}, both often rely on retrieved, compressed, or LLM-summarized evidence views, which may lose fine-grained details and become unstable on complex multi-hop tasks. This suggests that a context harness should not be limited to retrieving or reducing the input. Instead, while preserving full access to the original input, it should dynamically query and maintain a repository of supporting evidence according to the current reasoning stage, helping the model perform high-quality reasoning over long contexts.\looseness-1

We propose \textbf{Recursive Evidence Replay as LLM Harness for Long-Context Reasoning} (\name), a training-free inference method for long-context reasoning. Given a long context and a question, \name{} reads the original prompt, uses question-conditioned internal attention as candidate evidence proposals, materializes selected tokens as grounded text spans, and replays these spans before final answer generation. As shown in Figure~\ref{fig:teaser}, 128 tokens in a 128K-token context, already account for roughly 50–80\% of accumulated question-conditioned relevance. \name{} turns these sparse signals into an evidence pool: across a small number of rounds, \name{} updates an ordered evidence pool by conditioning each new selection step on the original context, the question, and the evidence pool accumulated so far. This scaffold serves as a temporary external workspace in an iterative pipeline, which is why we view ReContext as LLM harness between full-context reading and final generation. The replayed scaffold changes the model state from which the next round's query-token attention scores are computed, allowing later rounds to surface evidence related to previously selected spans. The full context remains in the prompt; the evidence pool is used for emphasis, not exclusion. We use ``recursive'' in this limited inference-time sense: each evidence proposal depends on the evidence pool produced by previous rounds, rather than on an open-ended reasoning loop.\looseness-1

We further provide theoretical insights for \name{} from the perspective of associative memory. The long context can be viewed as a memory store, the question as a retrieval cue, attention as a prompt-internal proxy for cue-trace association, and replay as reactivation of selected traces near generation time.  This view yields a monotonic-improvement proof showing that recursive evidence replay can move the hidden representation toward the answer embedding, and frames \name{} as query-evidence rebinding with internal relevance signals rather than a trained retriever or context pruning.

\noindent In summary, the contributions of our work are:
\begin{itemize}[itemsep=0.5ex, parsep=0pt, topsep=-2.3pt, leftmargin=0.4cm]
    \item We introduce \textbf{Recursive Evidence Replay as LLM Harness} (\name), a training-free method that converts prompt-internal relevance signals into an explicit recursive evidence pool while preserving the full context.

    \item We provide an associative-memory explanation of \name{} and a monotonic-improvement proof, formalizing evidence selection as cue-trace association and replay as trace reactivation for query-evidence rebinding.
    
    \item We evaluate \name{} on eight 128K long-context datasets across Qwen3-4B, Qwen3-8B and Llama3-8B, where it achieves the best average rank on all three backbones and improves mean accuracy over Vanilla from 0.24 to 0.30, a \textbf{24.6\% relative gain}.
\end{itemize}

%% file: sections/related_work.tex
\section{Related Work}
\label{Related_work}
\subsection{Long-Context Utilization and Evidence-Guided Reasoning}

Recent work has significantly extended the context window of large language models (LLMs), enabling them to process long documents, multi-document inputs, code repositories, and extended dialogue histories. Existing methods improve long-context modeling through position extrapolation, efficient attention mechanisms, and long-context fine-tuning~\cite{peng2023yarn, chen2024longlora, ding2023longnet, ding2024longrope}. Along with these modeling advances, a series of benchmarks have been proposed to evaluate long-context capabilities across diverse scenarios, including document question answering, multi-document reasoning, retrieval, summarization, code completion, and synthetic stress tests~\cite{bai2024longbench, shaham2023zeroscrolls, an2023leval, zhang2024infinitebench, hsieh2024ruler, yen2025helmet}. 
These studies show that long-context evaluation should not only measure whether a model can accept long inputs, but also whether it can locate and use relevant information hidden in long contexts. 

However, increasing the context window alone does not guarantee effective context utilization. Prior analysis has shown that LLMs are sensitive to the position of relevant evidence and may fail to use information when it appears in less favorable locations within the prompt~\cite{liu2023lost}. To address this issue, retrieval-augmented generation retrieves relevant passages before generation~\cite{lewis2021retrieval}, while prompt and context compression methods reduce the input length by filtering, pruning, or compressing less informative content~\cite{li2023selective, jiang2023llmlingua, jiang2024longllmlingua}. Another line of work improves inference efficiency through KV cache compression or token eviction~\cite{liu2023scissorhand, zhang2023h2oheavyhitteroracleefficient, xiao2024efficientstreaminglanguagemodels}.\looseness-1

\subsection{Internal Attention Signals and Memory-Based Interpretation}

Another related direction studies token-level importance inside LLMs. Prior work has observed that attention patterns in long-context inference are often highly structured: a small number of tokens may act as heavy hitters, attention sinks, or salient key positions that contribute disproportionately to future attention computation~\cite{liu2023scissorhand, zhang2023h2oheavyhitteroracleefficient, xiao2024efficientstreaminglanguagemodels}. Based on this observation, recent methods select or retain important KV cache tokens to improve inference efficiency while preserving model performance~\cite{li2024snapkvllmknowslooking, cai2025pyramidkvdynamickvcache, tang2024questqueryawaresparsityefficient, feng2025adakvoptimizingkvcache}.\looseness-1 

These studies demonstrate that model-internal signals can provide useful information about which context tokens are important. Our work is related to this line of research, but differs in objective.
Rather than proposing a new token-importance estimator or KV-cache compression strategy, \name{} uses existing question-to-context relevance signals to construct grounded candidate evidence spans, and studies how replaying these spans can improve long-context answer generation. The selected evidence is question-conditioned and chosen according to its relevance to the current question, rather than its general contribution to maintaining the cache. This use of relevance signals also admits an associative-memory interpretation: selected spans are treated as context traces associated with the query and replayed before generation. Classical and modern associative memory models retrieve stored patterns using partial or noisy cues, and recent studies have connected Transformer attention with memory retrieval mechanisms~\cite{ramsauer2021hopfieldnetworksneed, krotov2025modernmethodsassociativememory}.

%% file: sections/method.tex
\section{Method}
\label{Method}

\input{tables/main_results}

\subsection{Overview}

Given a long context $C$ and a question $q$, a standard long-context LLM generates directly from $[C;q]$. \textbf{Context Harness with Recursive Evidence Selection} (\name) instead separates evidence organization from answer generation. It first reads the original prompt, extracts candidate evidence spans using question-conditioned internal relevance signals, replays these spans as an evidence pool, and then generates the final answer from the full context, the evidence pool, and the question.
The method does not prune the prompt or directly modify attention logits during final decoding. The original context remains available throughout generation. The replayed scaffold emphasizes candidate evidence and makes evidence utilization explicit while keeping unselected context accessible.

\name{} maintains an ordered evidence pool and updates it over a small fixed number of rounds. Each round computes relevance scores from a prompt that already contains the evidence pool accumulated in previous rounds.

\subsection{Evidence Selection}

Let $M$ denote the backbone LLM. For a current prompt $x$, let $\mathcal{I}_x$ denote token positions in $x$, and let $\mathcal{I}_C\subseteq\mathcal{I}_x$ denote positions belonging to the original context $C$. Let $\mathcal{Q}(x)=(t_1,\ldots,t_L)$ be the last $L\le w$ cue positions in the prompt suffix, with $w=8$ in our main experiments. These suffix cues provide a query-conditioned readout, and in later rounds are conditioned on the replayed scaffold. During a read pass over $x$, we score prompt tokens by aggregating attention from these cue tokens over selected heads. For cue position $t_u$, define
\begin{equation}
    a^{(u)}_i =
    \frac{1}{|\mathcal{H}|}
    \sum_{(l,h)\in\mathcal{H}} A^{(l,h)}_{t_u,i},
    \qquad i\in\mathcal{I}_x ,
\end{equation}
where $\mathcal{H}$ is a selected set of layer-head pairs, and $A^{(l,h)}_{t_u,i}$ is the attention weight from cue token $t_u$ to token $i$ at layer-head pair $(l,h)$. We accumulate cue-token evidence across $\mathcal{Q}(x)$ with exponential decay and normalization to obtain the final relevance score $r_i$. We then restrict candidates to the original context and select the top-$K$ positions:
\begin{equation}
    P = \operatorname{TopK}_{i\in\mathcal{I}_C}(r_i;K).
\end{equation}
Here $K$ is the evidence-token budget, and $P$ contains the selected context positions.
We treat these scores as an inexpensive prompt-internal proposal signal, which can identify candidate spans that may be useful for answer generation.

\subsection{Evidence Materialization and Replay}

Token-level proposals are often too fragmentary for answer generation. A selected token may identify an entity, date, or predicate, but the model usually needs the surrounding statement to use it reliably. Therefore, \name{} maps selected token positions back to their containing sentences or local spans. Let $\mathcal{S}_C=(s_1,\ldots,s_N)$ be the ordered span decomposition of $C$, and let $\mathrm{pos}(s_n)$ denote the token positions covered by span $s_n$. The evidence pool is the ordered subsequence of spans touched by selected tokens:
\begin{equation}
    E = (s_n : \mathrm{pos}(s_n)\cap P\neq\emptyset)_{n=1}^{N}.
\end{equation}
Each evidence unit is copied from the original prompt, so the evidence pool is grounded rather than freely generated.

For a single-pass replay, the prompt is
\begin{equation}
    x^+ = [C; \phi(E); q],
\end{equation}
where $\phi(E)$ is the textual replay format. The answer is generated as
\begin{equation}
    y \sim M(\cdot \mid x^+).
\end{equation}
Replay places selected evidence near the question while keeping the full context available. In this sense, \name{} selects for emphasis rather than exclusion: unselected context remains available during final generation.

\subsection{Recursive Evidence Selection}

For $R$ rounds, \name{} updates the evidence pool recursively. Let $E^{(0)}=\emptyset$ be the initial ordered evidence pool. At round $j \in \{1,\ldots,R\}$, the current prompt is
\begin{equation}
    x^{(j-1)} = [C; \phi(E^{(j-1)}); q].
\end{equation}
The model reads $x^{(j-1)}$, so the replayed scaffold conditions the hidden states from which query-side prompt-suffix attention scores are computed. In the main setting, candidate positions are selected only from the original context token positions $\mathcal{I}_C$; the replayed scaffold conditions scoring but is not treated as a source of new copied spans. \name{} then obtains relevance scores $r^{(j)}$, selects positions $P^{(j)}$, materializes spans from the original context, and appends only spans that are not already in the evidence pool. The span list proposed at round $j$ is\looseness-1
\begin{equation}
    \widehat{S}^{(j)}
    =
    (s_n : \mathrm{pos}(s_n)\cap P^{(j)}\neq\emptyset)_{n=1}^{N}.
\end{equation}
The newly added spans are the ordered subsequence
\begin{equation}
    \Delta E^{(j)}=\widehat{S}^{(j)}\setminus E^{(j-1)},
\end{equation}
and the evidence pool is updated by ordered concatenation:
\begin{equation}
    E^{(j)}=E^{(j-1)}\cup\Delta E^{(j)} .
\end{equation}
Here $\oplus$ denotes ordered concatenation; we avoid set union because the evidence pool is an ordered list. The final answer is generated from
\begin{equation}
    x^{(R)} = [C; \phi(E^{(R)}); q].
\end{equation}
\begin{equation}
    y \sim M(\cdot \mid x^{(R)}).
\end{equation}

We call this process recursive because each round conditions on the evidence pool produced by previous rounds. In practice, $R$ is small and fixed, so \name{} remains a lightweight inference-time wrapper rather than an open-ended reasoning procedure.

\subsection{Theoretical Analysis}

In this subsection, we provide theoretical underpinning that our recursive evidence replay process in \name{} can  push the hidden embedding toward the answer. Our result is formally stated in Theorem~\ref{thm:similarity} below.

\begin{theorem}[monotonic improvement]
\label{thm:similarity}
Our theoretical setup is similar to prior works \cite{nichani2025understanding,olsson2022context} and stated in Appendix~\ref{app:thm:setup}. Let $h^{(j)}$ denote the hidden embedding after the $j$-th evidence replay step, and let $y$ denote the embedding of the answer. Then, for every step $j\ge1$,
\begin{align}
\cos(h^{(j)},y)>\cos(h^{(j-1)},y),
\end{align}
where $\cos(\cdot,\cdot)$ denotes cosine similarity.
\end{theorem}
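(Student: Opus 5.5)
Since the setup in Appendix~\ref{app:thm:setup} is deferred, I will use the standard associative-memory model of \cite{nichani2025understanding,olsson2022context}. Token embeddings are (near-)orthonormal. Each context fact is stored as a key--value association, with value vectors in the same embedding space. The attention readout from a cue $h$ returns a softmax-weighted sum of value vectors. Under this model, the answer embedding $y$ is the value associated with the evidence trace that the question cue retrieves. One evidence replay step then acts additively on the residual stream:
\begin{align}
h^{(j)} = h^{(j-1)} + \eta\, v^{(j)},
\end{align}
where $v^{(j)}=\sum_i \alpha^{(j)}_i u_i$ is the attention readout over the replayed spans and $\eta>0$. My plan is to reduce the theorem to a statement about this one-step update and then show the update strictly improves cosine similarity.

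\textbf{Step 1 (decomposition).} Write $h^{(j-1)} = a\,\hat y + b\,z$, where $\hat y = y/\|y\|$, $z\perp \hat y$ is a unit vector, $a=\cos(h^{(j-1)},y)\|h^{(j-1)}\|$ and $b\ge 0$. Because the replayed evidence pool is grounded in spans containing the answer trace, the readout should decompose as $v^{(j)} = \beta\,\hat y + \gamma\, w$ with $w\perp\hat y$. Near-orthogonality of the other stored values should give $\beta$ bounded below by the attention mass on the answer trace, and $|\gamma|$ small.

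\textbf{Step 2 (monotonicity lemma).} I will show that for any $h$ with $\cos(h,y)<1$ and any $v$ satisfying
\begin{align}
\langle v,\hat y\rangle\,\|h\|^2 > \langle h,\hat y\rangle\,\langle h,v\rangle,
\end{align}
the map $t\mapsto\cos(h+tv,y)$ has positive derivative at $t=0$. This is a direct differentiation of $\langle h+tv,\hat y\rangle/\|h+tv\|$. In the clean case $\gamma=0$, the condition becomes $\beta(\|h\|^2-a^2)=\beta b^2>0$. That holds whenever $b>0$ and $\beta>0$, and in that case the cosine is in fact increasing for every $t>0$: it equals $(a+t\beta)/\sqrt{(a+t\beta)^2+b^2}$, which is strictly increasing in $a+t\beta$.

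\textbf{Step 3 (induction and the replay effect).} I need $\beta^{(j)}>0$ and the orthogonal perturbation $\gamma^{(j)}$ dominated at every step $j\ge 1$. This is where the recursion matters. Since $E^{(j)}\supseteq E^{(j-1)}$ and the pool is rebuilt each round, the answer-trace span remains in the replay once it has been selected. Moreover, as $h$ aligns with $y$, its key score increases, so by softmax monotonicity the attention mass on that span does not decrease. Induction over $j$ then keeps $\beta^{(j)}$ bounded below and the cosine strictly increasing.

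\textbf{Main obstacle.} The hard part is controlling the cross term $\gamma\,\langle z,w\rangle$ from distractor traces in the non-idealized case. I would first try to handle it using the orthonormal-embedding assumption of the setup, which makes distractor values orthogonal to both $\hat y$ and the current residual $z$ in expectation. Failing that, I would impose a margin condition $\beta b>|\gamma|\,|a|$, which makes the inequality in Step 2 strict.
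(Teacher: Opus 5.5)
\paragraph{Verdict: genuine gap.}
Your proposal proves monotonicity for a model you built yourself, not for the setup of Appendix~\ref{app:thm:setup} that the theorem refers to, and the two behave differently. In that setup, replay is not an additive update $h^{(j)}=h^{(j-1)}+\eta v^{(j)}$. Step $j$ appends a copy of the argmax token $x^{(j-1)}_{\operatorname{argmax}(a^{(j-1)})}$. It then recomputes $h^{(j)}=\sum_i a^{(j)}_i x^{(j)}_i$ from scratch as a softmax average over the extended sequence, with the previous state $h^{(j-1)}$, not $q$, as the cue. So your Step~2 lemma about $t\mapsto\cos(h+tv,y)$ never gets applied.

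Your Step~3 claim, that the attention mass on the answer trace cannot decrease, is in fact false in this model without a condition on $q$. Take $\langle c_{i^*},q\rangle=\ln(n-1)$ and $\langle c_i,q\rangle=0$ otherwise, so $a^{(0)}_{i^*}=\tfrac12$. The new cue has scores $\langle c_k,h^{(0)}\rangle=a^{(0)}_k\in[0,1]$, far flatter than those of $q$. For $n=100$ this gives $\cos(h^{(0)},y)\approx0.99$ but $\cos(h^{(1)},y)\approx0.31$, even though $c_{i^*}$ was duplicated. This $q$ violates the paper's hypotheses, and that is the point: some condition on $q$ is needed to survive the switch from $q$ to $h^{(0)}$ at the first step, and your argument never identifies one. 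The margin condition $\beta b>|\gamma|\,|a|$ you fall back on is an extra assumption inside your own model, not a consequence of the theorem's hypotheses.

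\paragraph{The missing idea: per-token weight bookkeeping.}
Let $w^{(j)}_k$ be the total attention weight on all copies of $c_k$. Then $h^{(j)}=\sum_k w^{(j)}_k c_k$ and, by orthonormality, $\cos(h^{(j)},y)=\bigl(1+\sum_{i\ne i^*}(w^{(j)}_i/w^{(j)}_{i^*})^2\bigr)^{-1/2}$. It therefore suffices to show that every ratio $R^{(j)}_i=w^{(j)}_{i^*}/w^{(j)}_i$ increases.

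Two facts drive this. First, $\langle c_k,h^{(j-1)}\rangle=w^{(j-1)}_k$. Second, $c_{i^*}$ has $j+1$ copies at step $j$, since it remains the argmax while all ratios exceed $1$. Together these give $R^{(j)}_i=(j+1)e^{\Delta^{(j-1)}_i}$ with $\Delta^{(j-1)}_i=w^{(j-1)}_{i^*}-w^{(j-1)}_i$. An induction on the closed form $\Delta^{(j)}_i=(j+1-e^{-\Delta^{(j-1)}_i})/(j+1+\sum_{m\ne i^*}e^{-\Delta^{(j-1)}_m})$ shows $\Delta^{(j)}_i>\Delta^{(j-1)}_i$. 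This yields $R^{(j+1)}_i>R^{(j)}_i$. The growing multiplicity $j+1$ is the model's counterpart of the expanding evidence pool, and your argument has no analogue of it. The hypotheses on $q$ are used only to start the induction, i.e.\ to get $R^{(1)}_i>R^{(0)}_i>1$.

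\paragraph{Aside on the paper's hypothesis.}
The base step only needs $\langle y-c_i,q\rangle<a^{(0)}_{i^*}-a^{(0)}_i+\ln 2$. The paper's stronger condition, without the $\ln 2$, cannot hold when $n\ge2$. Write $d_i=\langle y-c_i,q\rangle$. That condition says $d_i<a^{(0)}_{i^*}-a^{(0)}_i=a^{(0)}_i(e^{d_i}-1)$, and also forces $d_i<1$. Hence $a^{(0)}_i>d_i/(e^{d_i}-1)>1/(e-1)$ for every $i\ne i^*$, and so $a^{(0)}_{i^*}+a^{(0)}_i>1$, which is impossible.
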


The proof is involved and thus deferred to Appendix~\ref{app:thm:proof}. Intuitively, our Theorem~\ref{thm:similarity} shows that explicitly adding the evidence to the prompt can monotonically increase the similarity of the hidden embedding between the prediction and the answer.\looseness-1 

%% file: tables/main_results.tex
\begin{table*}[t]
\centering
\small
\definecolor{bestscorebg}{HTML}{F9CB9C}
\definecolor{secondscorebg}{HTML}{FCE5CD}
\newcommand{\bestscore}[1]{\cellcolor{bestscorebg}\textbf{#1}}
\newcommand{\secondscore}[1]{\cellcolor{secondscorebg}\underline{#1}}
\setlength{\tabcolsep}{1.8pt}
\caption{Main benchmark comparison across long-context tasks. \name{} achieves the best average rank for all three backbones, showing consistent gains across datasets and model scales. Darker and lighter backgrounds indicate the best and second-best results within each backbone, respectively.}
\label{tab:main_results}
\vspace{-0.2cm}
\resizebox{1.0\textwidth}{!}{
\begin{tabular}{@{}ll@{\hspace{4pt}}*{12}{>{\centering\arraybackslash}p{0.68cm}}>{\centering\arraybackslash}p{0.90cm}>{\centering\arraybackslash}p{0.76cm}>{\centering\arraybackslash}p{0.95cm}@{}}
\toprule
\multirow{2}{*}{\textbf{Model}} & \multirow{2}{*}{\textbf{Method}} &
\multicolumn{2}{c}{\shortstack{\textbf{NQ}\\\textbf{128K}}} &
\multicolumn{2}{c}{\shortstack{\textbf{TriviaQA}\\\textbf{128K}}} &
\multicolumn{2}{c}{\shortstack{\textbf{HotpotQA}\\\textbf{128K}}} &
\multicolumn{2}{c}{\shortstack{\textbf{PopQA}\\\textbf{128K}}} &
\multicolumn{2}{c}{\shortstack{\textbf{NarrQA}\\\textbf{128K}}} &
\multicolumn{2}{c}{\shortstack{\textbf{InfQA}\\\textbf{128K}}} &
\shortstack{\textbf{InfMC}\\\textbf{128K}} &
\shortstack{\textbf{Clip.}\\\textbf{128K}} &
\multirow{2}{*}{\shortstack{\textbf{Avg}\\\textbf{Rank}}} \\
\cmidrule(lr){3-4}
\cmidrule(lr){5-6}
\cmidrule(lr){7-8}
\cmidrule(lr){9-10}
\cmidrule(lr){11-12}
\cmidrule(lr){13-14}
\cmidrule(lr){15-15}
\cmidrule(lr){16-16}
& & Acc & F1 & Acc & F1 & Acc & F1 & Acc & F1 & Acc & F1 & Acc & F1 & Acc & Acc \\
\midrule
\multirow{6}{*}{Qwen3-4B}
& Vanilla & \underline{0.02} & \underline{0.21} & 0.04 & 0.24 & 0.00 & 0.10 & 0.00 & 0.11 & 0.02 & 0.17 & 0.09 & 0.21 & \secondscore{0.51} & 0.38 & 4.39 \\
& AttnSharp & \underline{0.02} & \underline{0.21} & 0.02 & 0.23 & 0.00 & 0.09 & 0.00 & 0.10 & 0.03 & 0.17 & 0.10 & \secondscore{0.23} & 0.47 & 0.42 & 4.25 \\
& DySCO & \underline{0.02} & \underline{0.21} & 0.10 & 0.30 & 0.03 & 0.13 & 0.00 & 0.10 & 0.01 & 0.17 & \secondscore{0.11} & \secondscore{0.23} & 0.50 & 0.44 & 4.00 \\
& A-MEM & \underline{0.02} & 0.20 & 0.19 & 0.37 & 0.06 & 0.15 & \secondscore{0.06} & \secondscore{0.16} & 0.04 & 0.19 & 0.07 & 0.18 & 0.43 & \secondscore{0.48} & \secondscore{3.57} \\
& DAC & \underline{0.02} & 0.18 & \secondscore{0.21} & \secondscore{0.38} & \secondscore{0.07} & \secondscore{0.17} & 0.01 & 0.09 & \secondscore{0.05} & \secondscore{0.20} & 0.07 & 0.19 & 0.43 & 0.24 & 3.79 \\
& \textbf{\name} & \bestscore{0.08} & \bestscore{0.25} & \bestscore{0.30} & \bestscore{0.45} & \bestscore{0.08} & \bestscore{0.19} & \bestscore{0.07} & \bestscore{0.19} & \bestscore{0.07} & \bestscore{0.21} & \bestscore{0.12} & \bestscore{0.24} & \bestscore{0.55} & \bestscore{0.52} & \bestscore{1.00} \\
\midrule
\multirow{6}{*}{Qwen3-8B}
& Vanilla & 0.06 & 0.26 & 0.53 & 0.66 & 0.18 & 0.31 & 0.18 & 0.32 & \secondscore{0.19} & 0.33 & 0.22 & 0.36 & \bestscore{0.64} & 0.30 & 3.96 \\
& AttnSharp & 0.06 & 0.26 & 0.43 & 0.59 & 0.18 & 0.31 & 0.13 & 0.29 & 0.18 & \secondscore{0.34} & \secondscore{0.23} & \secondscore{0.37} & 0.60 & 0.32 & 4.50 \\
& DySCO & 0.09 & \secondscore{0.30} & 0.51 & 0.64 & 0.19 & 0.31 & \secondscore{0.22} & \secondscore{0.35} & 0.18 & 0.33 & \secondscore{0.23} & 0.36 & \secondscore{0.63} & \bestscore{0.34} & \secondscore{3.25} \\
& A-MEM & 0.08 & 0.28 & 0.58 & 0.67 & \secondscore{0.21} & 0.32 & 0.19 & 0.30 & 0.17 & 0.31 & 0.18 & 0.28 & 0.58 & 0.18 & 4.21 \\
& DAC & \secondscore{0.12} & 0.29 & \secondscore{0.65} & \secondscore{0.72} & \bestscore{0.22} & \bestscore{0.35} & 0.15 & 0.30 & 0.14 & 0.28 & 0.15 & 0.26 & \bestscore{0.64} & 0.20 & 3.61 \\
& \textbf{\name} & \bestscore{0.13} & \bestscore{0.33} & \bestscore{0.68} & \bestscore{0.75} & 0.20 & \secondscore{0.34} & \bestscore{0.23} & \bestscore{0.36} & \bestscore{0.21} & \bestscore{0.35} & \bestscore{0.25} & \bestscore{0.39} & \secondscore{0.63} & \secondscore{0.33} & \bestscore{1.46} \\
\midrule
\multirow{6}{*}{Llama3-8B}
& Vanilla & 0.15 & 0.29 & \secondscore{0.69} & \secondscore{0.76} & \secondscore{0.24} & \bestscore{0.39} & \secondscore{0.21} & \secondscore{0.28} & 0.13 & 0.27 & 0.15 & \secondscore{0.34} & 0.56 & 0.32 & \secondscore{3.25} \\
& AttnSharp & 0.15 & 0.28 & \secondscore{0.69} & \secondscore{0.76} & 0.23 & \bestscore{0.39} & 0.20 & \secondscore{0.28} & 0.13 & 0.27 & 0.17 & \secondscore{0.34} & \secondscore{0.57} & 0.22 & 3.29 \\
& DySCO & 0.10 & 0.26 & 0.63 & 0.72 & 0.23 & 0.37 & 0.18 & 0.27 & 0.13 & 0.26 & 0.14 & 0.33 & 0.56 & \secondscore{0.38} & 4.57 \\
& A-MEM & \secondscore{0.16} & \bestscore{0.34} & 0.67 & 0.75 & \secondscore{0.24} & 0.34 & 0.17 & 0.23 & 0.15 & \secondscore{0.29} & \secondscore{0.19} & 0.33 & 0.54 & 0.32 & 3.43 \\
& DAC & 0.06 & 0.23 & 0.56 & 0.68 & 0.16 & 0.30 & 0.15 & 0.22 & \secondscore{0.16} & \bestscore{0.30} & 0.15 & 0.28 & 0.49 & 0.28 & 5.18 \\
& \textbf{\name} & \bestscore{0.19} & \secondscore{0.31} & \bestscore{0.70} & \bestscore{0.77} & \bestscore{0.25} & \bestscore{0.39} & \bestscore{0.22} & \bestscore{0.29} & \bestscore{0.17} & \secondscore{0.29} & \bestscore{0.22} & \bestscore{0.40} & \bestscore{0.64} & \bestscore{0.40} & \bestscore{1.29} \\
\bottomrule
\end{tabular}
}
\end{table*}

%% file: sections/experiment.tex
\section{Experiments}
\label{Experiment}

\subsection{Datasets}

\begin{figure*}[t]
    \centering
    \begin{minipage}[t]{0.45\textwidth}
        \centering
        \includegraphics[width=\linewidth]{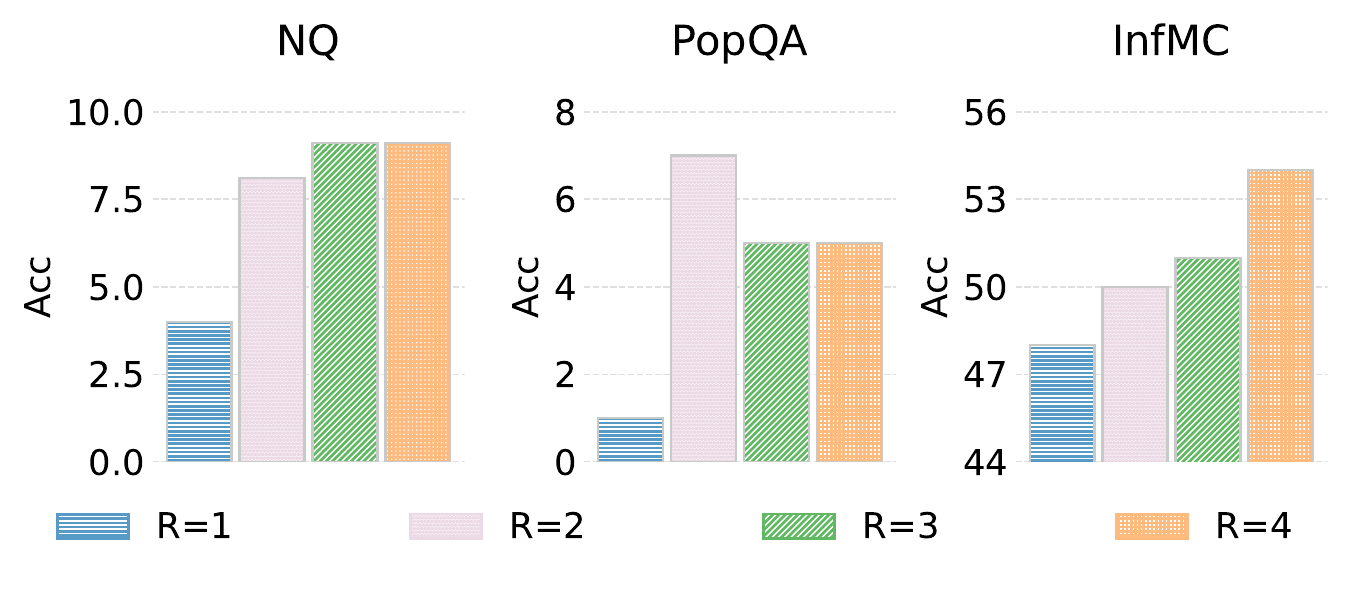}
    \end{minipage}
    \hfill
    \begin{minipage}[t]{0.45\textwidth}
        \centering
        \includegraphics[width=\linewidth]{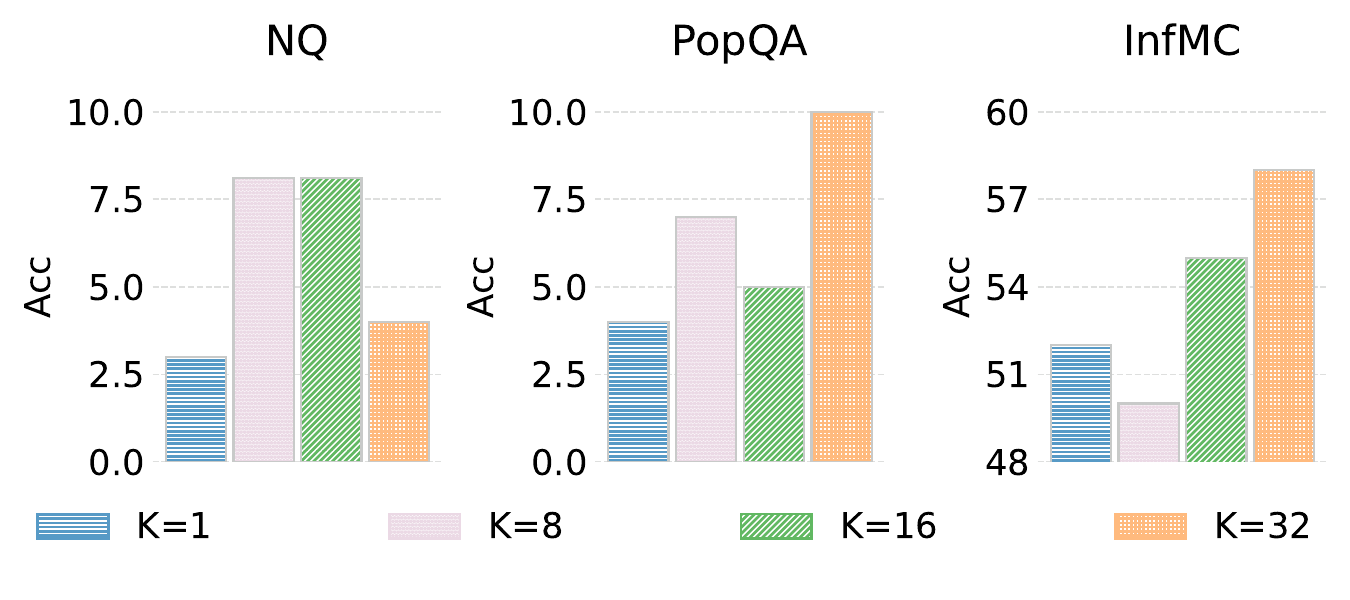}
    \end{minipage}
    \caption{Visualization of the main ablation studies. Left: the effect of recursive evidence-selection rounds $R$. Right: the effect of the top-$K$ evidence-token candidate budget.}
    \label{fig:ablation_hyperparameters}
\end{figure*}

We evaluate \name{} on eight long-context benchmarks: Natural Questions (NQ), TriviaQA, HotpotQA, PopQA, NarrativeQA, InfBench QA, InfBench MC, and CLIPPER. For the first seven datasets, we adopt the 128K-context versions constructed by HELMET~\cite{yen2025helmet}, while CLIPPER~\cite{pham2025clipper} evaluates evidence-grounded claim verification over long book contexts. These tasks cover factual question answering, multi-hop reasoning, narrative understanding, multiple-choice reasoning, and long-context claim verification. Detailed dataset descriptions are provided in Appendix~\ref{sec:appendix_datasets}.

We use the official metric associated with each task. For NQ, TriviaQA, HotpotQA, PopQA, NarrativeQA, and InfBench QA, we report answer accuracy (Acc) and token-level F1. For InfBench MC and CLIPPER, we report accuracy. Together, these tasks provide a broad testbed for long-context utilization.

\subsection{Baselines}

We compare \name{} with the following:
\begin{itemize}[itemsep=0.5ex, parsep=0pt, topsep=-2.3pt, leftmargin=0.2cm]
    \item \textbf{Vanilla} directly generates from the full context and question.
    \item \textbf{AttnSharp} sharpens attention toward question-relevant context tokens, following attention-based long-context utilization methods~\citep{tang2024questqueryawaresparsityefficient,ye2026dysco}.
    \item \textbf{DySCO} dynamically rescales decoding attention using retrieval-head signals~\citep{ye2026dysco}.
    \item \textbf{A-MEM} stores and retrieves task-relevant context evidence with an external agentic memory module~\citep{xu2025amem}.
    \item \textbf{DAC} applies dynamic attention-aware prompt compression before generation~\citep{zhao2025dac}.
\end{itemize}
In contrast, \textbf{\name{}} preserves the full original context and replays a query-conditioned evidence pool before final generation.

\subsection{Experimental Settings}

We evaluate three backbone LLMs: Qwen3-4B, Qwen3-8B, and Llama3.1-8B. For each backbone, all methods use the same prompting format, decoding configuration, and context budget. Unless otherwise specified, all methods receive the same original long-context input and question, and the main results are obtained with thinking disabled. 
Detailed experimental settings are provided in Appendix~\ref{sec:appendix_settings}.

\begin{figure*}[t]
    \centering
    \includegraphics[width=\textwidth]{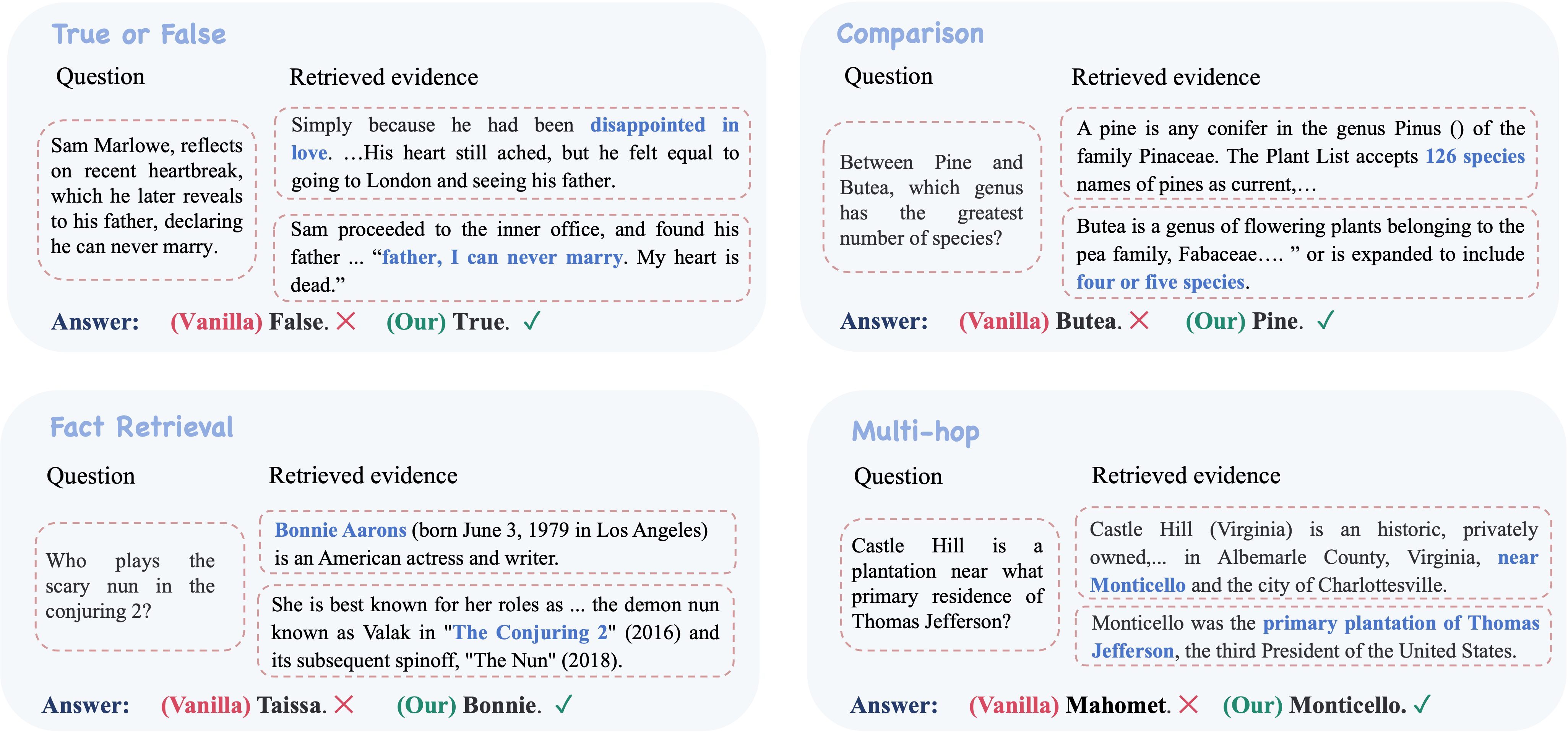}

    \caption{\textbf{Qualitative examples of \name{} evidence replay.} \name{} selects and replays query-relevant evidence spans (blue text) across diverse long-context reasoning tasks, enabling the model to ground its answer in the highlighted support and correct errors made by Vanilla generation.}
    \label{fig:case_study}
\end{figure*}

\subsection{Main Results}
The main comparison across eight datasets and three backbone models is summarized in Table~\ref{tab:main_results}. Task scores are reported as fractions in $[0,1]$, while average rank is computed by ranking methods within each backbone on each reported metric column and averaging the resulting ranks; lower is better. \name{} obtains the best average rank for all three backbones, with average ranks of 1.00 on Qwen3-4B, 1.46 on Qwen3-8B, and 1.29 on Llama3-8B. Averaging the eight Acc columns across all three backbones, \name{} improves over Vanilla from 0.24 to 0.30, a \textbf{relative gain of 24.6\%}.
On Qwen3-4B, \name{} achieves the best score on every reported metric, including improving NQ Acc from the strongest baseline score of 0.02 to 0.08. 

On Qwen3-8B, \name{} leads on most QA metrics and has the best average rank, with exceptions on HotpotQA, InfBench MC, and CLIPPER. On Llama3-8B, \name{} obtains the best average rank and the highest accuracy score for every task, while NQ, HotpotQA, and NarrativeQA F1 are led by other baselines. Overall, these results indicate that explicit evidence replay improves aggregate long-context performance across model families, without implying dominance on every individual metric.

Under a shorter 64K context budget with thinking disabled, Table~\ref{tab:robustness_64k_ablation} shows that \name{} stays within the top two on every reported metric. It ties for the best NQ Acc, obtains the best NQ F1, PopQA Acc, and InfBench MC accuracy, and ranks second on PopQA F1. Its macro-average over the five reported scores improves over Vanilla from 0.21 to 0.28, corresponding to a \textbf{35.0\% relative gain} based on the reported scores. These results suggest that the observed improvements are not tied to a single context budget.

\input{tables/thinking_ablation}

\input{tables/robustness_64k_ablation}

When thinking is enabled on Qwen3-4B, the robustness results in Table~\ref{tab:thinking_ablation} show that \name{} achieves the best NQ Acc and F1, the best PopQA Acc, and the best InfBench MC accuracy. Its macro-average over the five reported scores improves over Vanilla from 28.0 to 32.6, a \textbf{relative gain of 16.7\%}. PopQA F1 is an exception, where DySCO obtains the highest score. These results suggest that Recursive Evidence Selection remains useful even when the backbone model is allowed to produce intermediate reasoning.

\input{tables/token_source_ablation}

\begin{figure}[t]
    \centering
    \includegraphics[width=\columnwidth]{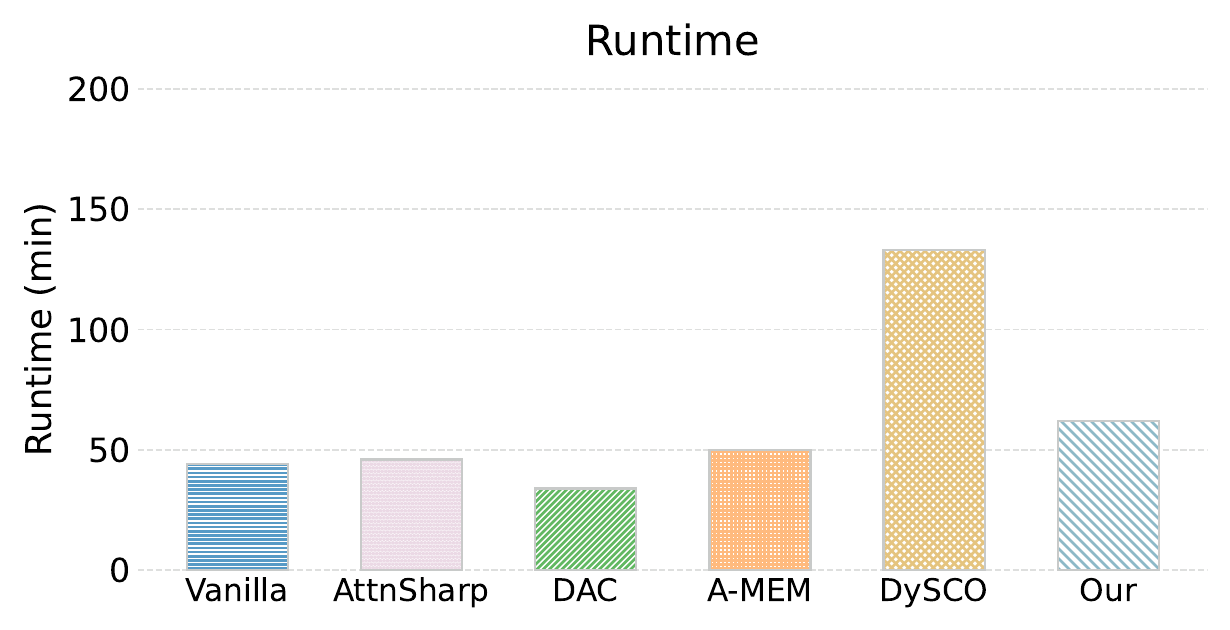}
    \caption{Runtime comparison on CLIPPER using Llama3-8B at 128K context length.}
    \label{fig:runtime_comparison}
\end{figure}

\subsection{Ablation Studies}

We analyze the effects of recursive selection rounds and candidate-token budget in Tables~\ref{tab:evidence_turn_ablation} and~\ref{tab:topk_ablation}, with selected accuracy trends visualized in Figure~\ref{fig:ablation_hyperparameters}.

The number of recursive evidence-selection rounds $R$ controls how many times the evidence pool is expanded before final replay. Moving from one round to two rounds improves all reported metrics, raising the macro-average from 0.17 to 0.22. Larger values provide further gains for NQ and InfBench MC, with the best NQ Acc at $R=3$ and $R=4$, the best NQ F1 and InfBench MC accuracy at $R=4$, and the best PopQA scores at $R=2$. This suggests that additional selection can improve performance on some tasks, but the best recursion depth is task-dependent rather than uniformly larger.

The top-$K$ token candidate budget determines how many high-scoring token positions can seed evidence span materialization in each selection round. With $R=2$ fixed, Table~\ref{tab:topk_ablation} shows that NQ Acc ties at $K=8$ and $K=16$, while NQ F1 peaks at $K=8$; by contrast, PopQA and InfBench MC achieve their strongest scores at $K=32$. The macro-average rises from 0.19 at $K=1$ to 0.23 at $K=32$, but the task-level pattern is not monotonic: larger candidate sets can expose more candidate spans, while smaller budgets can be cleaner for NQ.\looseness-1

Finally, the token-source ablation in Table~\ref{tab:token_source_ablation} compares selecting evidence tokens from the original context only versus from the full replay prompt. Context-only selection consistently outperforms full-prompt selection on NQ, PopQA, and InfBench MC, improving the macro-average from 0.19 to 0.23. The largest gains appear on PopQA Acc, which increases from 0.02 to 0.07, and NQ Acc, which increases from 0.04 to 0.08. This supports the main setting: the replayed scaffold conditions the model state used for scoring, but copied evidence spans are selected from the original context rather than allocating the evidence budget over the entire prompt.

\subsection{Qualitative Results}

We further compare the methods through qualitative case analysis on retrieval-heavy examples, as shown in Figure~\ref{fig:case_study}. Vanilla receives the full context, but relevant spans may become weakly bound at generation time. AttnSharp and DySCO strengthen attention toward relevant tokens, but the selected evidence remains latent inside the decoding process rather than being exposed as text. A-MEM and DAC create shorter evidence views through memory retrieval or compression, which can help when the preprocessing step keeps the right span but can also weaken the answer if supporting sentences are omitted. By contrast, \name{} copies model-selected evidence sentences into an explicit evidence pool and replays them near the question while leaving the original context intact. In the inspected cases, the evidence pool appears to encourage answers grounded in a compact set of supporting sentences rather than in a diffuse long prompt.

\subsection{Time and Space Efficiency}

As shown in Figure~\ref{fig:runtime_comparison}, \name{} introduces the evidence-selection and replay stage, making it slightly slower than the vanilla baseline. However, it remains substantially faster than DySCO, which changes the backbone forward or decoding logic. In terms of GPU memory consumption, \name{} inserts fewer than 128 additional evidence tokens, resulting in only minimal memory overhead. Consequently, its overall memory consumption is similar to both Vanilla. Detailed measurements are provided in Appendix~\ref{app:efficiency_details}.

%% file: tables/thinking_ablation.tex
\begin{table}[t]
\centering
\small
\caption{Robustness evaluation with thinking enabled. \name{} remains strongest on NQ, PopQA and InfMC, indicating that evidence replay remains useful when the backbone performs explicit reasoning.}
\label{tab:thinking_ablation}
\vspace{-0.2cm}
\setlength{\tabcolsep}{3.5pt}
\renewcommand{\arraystretch}{1.22}
\begin{tabular}{lccccc}
\toprule
\multirow{2}{*}{\textbf{Method}} &
\multicolumn{2}{c}{\shortstack{\textbf{NQ}\\\textbf{128K}}} &
\multicolumn{2}{c}{\shortstack{\textbf{PopQA}\\\textbf{128K}}} &
\shortstack{\textbf{InfMC}\\\textbf{128K}} \\
\cmidrule(lr){2-3}
\cmidrule(lr){4-5}
\cmidrule(l){6-6}
& Acc & F1 & Acc & F1 & Acc \\
\midrule
Vanilla & 0.08 & 0.24 & 0.14 & 0.25 & 0.69 \\
AttnSharp & 0.13 & 0.27 & 0.14 & 0.24 & 0.63 \\
DAC & 0.10 & 0.23 & 0.13 & 0.24 & 0.66 \\
A-MEM & 0.09 & 0.23 & 0.11 & 0.23 & \underline{0.71} \\
DySCO & \underline{0.14} & \underline{0.29} & \underline{0.17} & \textbf{0.29} & 0.67 \\
\textbf{\name} & \textbf{0.15} & \textbf{0.30} & \textbf{0.18} & \underline{0.28} & \textbf{0.72} \\
\bottomrule
\end{tabular}
\end{table}

%% file: tables/robustness_64k_ablation.tex
\begin{table}[t]
\centering
\small
\caption{Robustness evaluation under a shorter context budget. \name{} stays within the top two on every metric, suggesting that the gains are not tied to the main context length.}
\label{tab:robustness_64k_ablation}
\vspace{-0.2cm}
\setlength{\tabcolsep}{3.5pt}
\renewcommand{\arraystretch}{1.22}
\begin{tabular}{lccccc}
\toprule
\multirow{2}{*}{\textbf{Method}} &
\multicolumn{2}{c}{\shortstack{\textbf{NQ}\\\textbf{64K}}} &
\multicolumn{2}{c}{\shortstack{\textbf{PopQA}\\\textbf{64K}}} &
\shortstack{\textbf{InfMC}\\\textbf{64K}} \\
\cmidrule(lr){2-3}
\cmidrule(lr){4-5}
\cmidrule(l){6-6}
& Acc & F1 & Acc & F1 & Acc \\
\midrule
Vanilla & 0.07 & 0.24 & 0.04 & 0.20 & 0.48 \\
AttnSharp & 0.07 & 0.24 & 0.03 & 0.20 & 0.44 \\
DAC & \underline{0.09} & 0.23 & \underline{0.17} & \textbf{0.32} & \underline{0.53} \\
A-MEM & 0.06 & 0.20 & 0.06 & 0.19 & 0.51 \\
DySCO & \textbf{0.11} & \underline{0.26} & 0.07 & 0.23 & 0.46 \\
\textbf{\name} & \textbf{0.11} & \textbf{0.26} & \textbf{0.18} & \underline{0.30} & \textbf{0.54} \\
\bottomrule
\end{tabular}
\end{table}

%% file: tables/token_source_ablation.tex
\begin{table}[t]
\centering
\small
\caption{Ablation on evidence-token source. Selecting evidence from the original context consistently outperforms selecting from the full replay prompt.}
\label{tab:token_source_ablation}
\vspace{-0.2cm}
\setlength{\tabcolsep}{3.5pt}
\renewcommand{\arraystretch}{1.22}
\begin{tabular}{lccccc}
\toprule
\multirow{2}{*}{\textbf{Source}} &
\multicolumn{2}{c}{\shortstack{\textbf{NQ}\\\textbf{128K}}} &
\multicolumn{2}{c}{\shortstack{\textbf{PopQA}\\\textbf{128K}}} &
\shortstack{\textbf{InfMC}\\\textbf{128K}} \\
\cmidrule(lr){2-3}
\cmidrule(lr){4-5}
\cmidrule(l){6-6}
& Acc & F1 & Acc & F1 & Acc \\
\midrule
Full prompt & \underline{0.04} & \underline{0.23} & \underline{0.02} & \underline{0.14} & \underline{0.52} \\
Context & \textbf{0.08} & \textbf{0.25} & \textbf{0.07} & \textbf{0.19} & \textbf{0.54} \\
\bottomrule
\end{tabular}
\end{table}

%% file: sections/conclusion.tex
\section{Conclusion}
\label{Conclusion}

We present \name{}, a training-free method that turns model-internal relevance signals into an explicit evidence scaffold for long-context reasoning. \name{} recursively selects candidate evidence and replays the organized scaffold before answer generation while preserving the original context, thereby separating evidence organization from answer generation. Our associative-memory analysis provides a simple interpretation of this process as cue-conditioned trace reactivation, and our experiments across eight 128K-context datasets and three backbones show consistent improvements over strong long-context baselines.
These findings suggest that long-context inference can be improved not only by extending context windows or compressing inputs, but also by better organizing the evidence already available in the prompt.

%% file: sections/appendix.tex
\appendix


\section{Extended Related Work}
\label{sec:appendix_extended_related_work}

\paragraph{Long-context modeling and evaluation.}
Recent long-context research has substantially expanded the input length that LLMs can process through efficient attention, position extrapolation, and long-context fine-tuning~\citep{beltagy2020longformer, zaheer2020bigbird, peng2023yarn, chen2024longlora, ding2023longnet, ding2024longrope, gcformer, ren2025riskporiskbasedpolicyoptimization,ning2026codeagentharness, guo2024moreembracingsparsityinterpolation, wei2026agenticreasoninglargelanguage}. In parallel, long-context benchmarks have shifted evaluation beyond input length alone, covering document understanding, multi-document reasoning, retrieval, summarization, code, and synthetic stress tests~\citep{bai2024longbench, shaham2023zeroscrolls, an2023leval, zhang2024infinitebench, hsieh2024ruler, yen2025helmet, zhao2026papermindbenchmarkingagenticreasoning}. These studies clarify an important distinction between context access and context utilization: a model may accept a long prompt while still failing to locate or use the evidence needed for a particular question. This gap is also reflected in analyses of position sensitivity, where relevant information can be underused when it appears in less favorable locations~\citep{liu2023lost}. \name{} addresses this utilization problem without changing the backbone model or extending its context window. It keeps the original context available, but adds a query-conditioned evidence scaffold that makes candidate supporting spans more accessible near generation time.

\paragraph{Retrieval, external memory, and prompt compression.}
Retrieval-augmented generation and retrieval-enhanced language modeling improve grounding by retrieving relevant passages or chunks before generation~\citep{lewis2021retrieval, borgeaud2022retro}, while recent agentic memory systems maintain and query external memory structures across interactions~\citep{xu2025amem}. Prompt and context compression methods take a different route: they reduce the amount of text passed to the model by filtering, pruning, or compressing less informative content~\citep{li2023selective, jiang2023llmlingua, jiang2024longllmlingua, zhao2025dac,zhao2025saberswitchablebalancedtraining}. These approaches can reduce distraction and computation, but they also introduce an additional selection or compression step that may omit useful evidence. \name{} is complementary to this line of work. It does not build a persistent memory, train a retriever, or replace the original long context with a shortened version. Instead, it uses the model's own prompt-internal signals to copy candidate evidence spans into an explicit scaffold, while leaving the full context in the prompt as a fallback source of information.

\paragraph{Attention-based inference and KV-cache methods.}
Several inference-time methods use internal attention patterns to improve long-context efficiency or utilization. Query-aware sparsity and dynamic attention-scaling methods use attention-derived signals to select relevant tokens or adjust attention behavior during decoding~\citep{tang2024questqueryawaresparsityefficient, ye2026dysco}. A related line of KV-cache methods observes that long-context attention often concentrates on heavy hitters, attention sinks, or salient key positions, and uses this structure for cache retention or eviction~\citep{liu2023scissorhand, zhang2023h2oheavyhitteroracleefficient, xiao2024efficientstreaminglanguagemodels, li2024snapkvllmknowslooking, cai2025pyramidkvdynamickvcache, feng2025adakvoptimizingkvcache, zhao2025secondorderfinetuningpainllmsa, dang2025fzoofastzerothorderoptimizer}. \name{} differs in both objective and mechanism. It does not directly rescale attention logits during final decoding and does not optimize a cache budget. More importantly, it treats attention only as an inexpensive proposal signal rather than as a faithful explanation of model behavior. The selected spans are materialized as readable text, so the final generation is conditioned on an explicit evidence scaffold rather than only on latent attention intervention.

\paragraph{Associative-memory interpretation.}
The behavior of \name{} can also be viewed through an associative-memory lens. Classical associative-memory models describe content-addressable retrieval from stored patterns, while dense and modern Hopfield-style formulations connect such retrieval with higher-capacity memory and attention-like updates~\citep{hopfield1982neural, krotov2016dense, ramsauer2021hopfieldnetworksneed, krotov2025modernmethodsassociativememory, zhao2024sparsevqtransformerffnfreeframework, chen2025doesvectorquantizationfail}. Related neural memory architectures, including Memory Networks and key-value memory networks, frame reasoning as query-conditioned access to stored representations or facts~\citep{weston2015memory, sukhbaatar2015endtoend, miller2016keyvalue}. Transformer-specific analyses further connect attention or feed-forward components with associative and key-value memory views~\citep{bricken2021attention, geva2021transformer}. In our setting, the long context acts as a collection of memory traces, the question serves as a retrieval cue, and attention provides a prompt-internal proxy for cue--trace association rather than a faithful explanation of the model's decision process. Evidence sifting selects candidate traces, evidence materialization turns them back into grounded text spans, and replay reactivates these spans near answer generation. Under this view, recursive evidence sifting is a lightweight way to repeatedly re-query the same context under a scaffold-conditioned state, improving query--evidence rebinding without adding an external memory module, training a retriever, or removing the original context.

\section{Experiment Details}
\label{sec:appendix_experiment_details}

\subsection{Dataset Details}
\label{sec:appendix_datasets}

The first seven datasets in our evaluation come from the HELMET benchmark~\cite{yen2025helmet}, and CLIPPER is evaluated as an additional long-context claim-verification benchmark~\cite{pham2025clipper}.\looseness-1
\begin{itemize}[itemsep=0.5ex, parsep=0pt, topsep=-2.3pt]
    \item \textbf{NQ} evaluates open-domain factual question answering, where the model must locate short answer evidence in a long input.
    \item \textbf{TriviaQA} also tests factual question answering, but with trivia-style questions that often require matching paraphrased clues to supporting evidence.
    \item \textbf{HotpotQA} focuses on multi-hop question answering and requires combining evidence from multiple pieces of context.
    \item \textbf{PopQA} probes entity-centric factual knowledge and is sensitive to whether relevant evidence about less prominent entities is correctly used.
    \item \textbf{NarrativeQA} evaluates narrative understanding over long stories, requiring the model to connect events, characters, and plot information.
    \item \textbf{InfBench QA} contains free-form question answering over very long inputs and stresses evidence retrieval from extended contexts.
    \item \textbf{InfBench MC} uses a multiple-choice format over long inputs, testing whether the model can select the option best supported by the context.
    \item \textbf{CLIPPER} evaluates claim verification over book-length contexts with evidence-grounded synthetic claims.
\end{itemize}

\subsection{GPU Resources}
\label{sec:appendix_gpu_resources}

We conduct experiments on NVIDIA A100 and NVIDIA H200 GPU servers. The H200 runs are executed on an ARM64 (\texttt{aarch64}) system architecture. All methods compared under the same setting use the same backbone, context budget, prompting format, and decoding configuration on the corresponding hardware.

\input{tables/efficiency_analysis}
\subsection{Detailed Efficiency Analysis}
\label{app:efficiency_details}
We report wall-clock runtime on CLIPPER using Llama3-8B at 128K context length with thinking disabled.
Vanilla full-context decoding takes 44 minutes, AttnSharp takes 46 minutes, DAC takes 34 minutes in total, A-MEM takes 50 minutes, and DySCO requires 2 hours and 13 minutes. The best-performing variant of \name{} takes 62 minutes.
The additional runtime of \name{} mainly comes from evidence sifting and replay, which add computation beyond direct full-context decoding; nevertheless, \name{} remains substantially faster than DySCO, which changes the backbone forward or decoding logic. Its GPU memory stays at the same level as Vanilla and DySCO because the replay scaffold adds fewer than 128 evidence tokens.

\begin{figure*}[t]
    \centering
    \includegraphics[width=\linewidth]{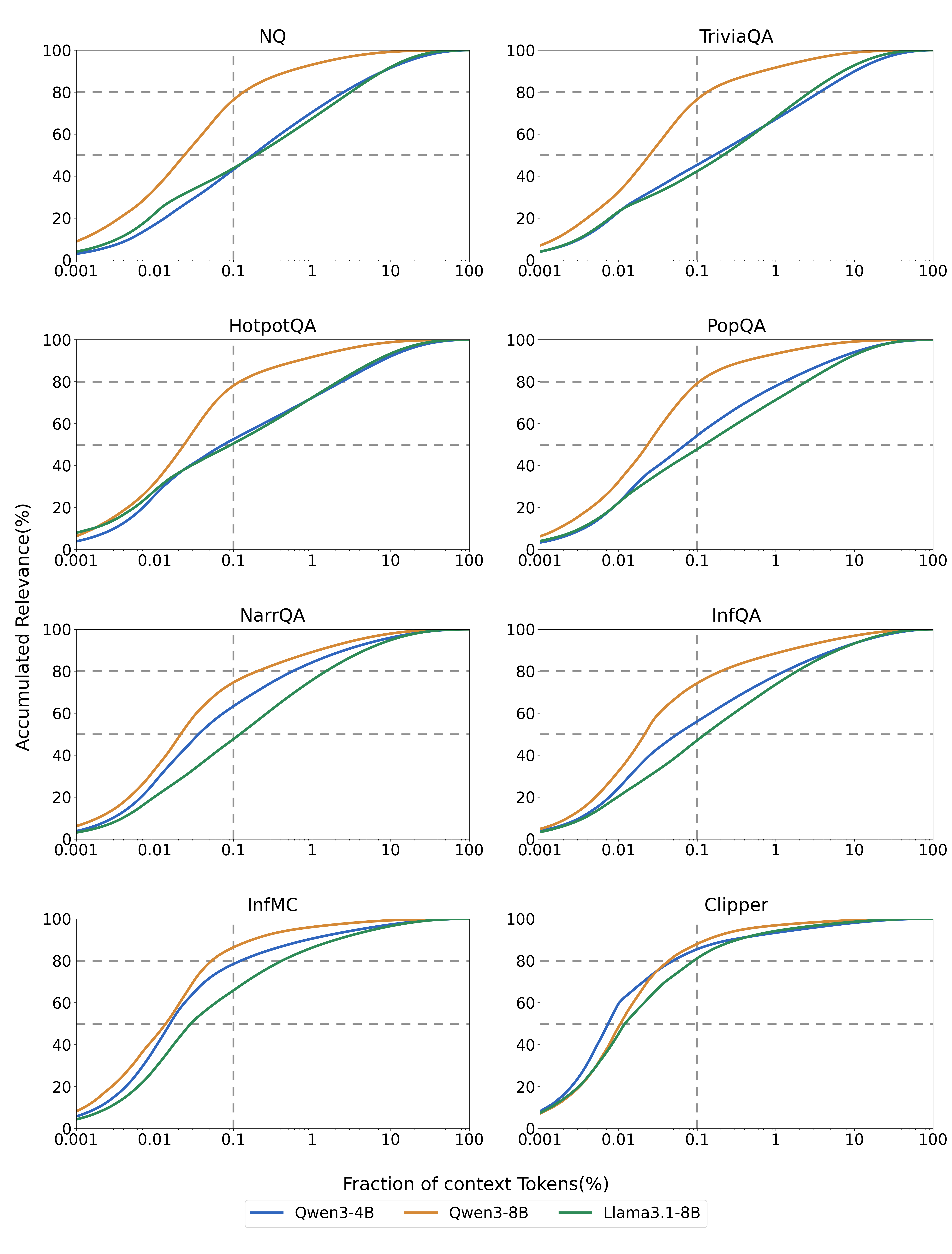}
    \caption{\textbf{Top 0.1\% of context tokens already accounts for about 50\% / 80\% accumulated relevance score across three LLMs, corresponding to only 128 tokens in a 128K-token context.} This figure ranks all context tokens by their relevance scores with respect to the question and shows how much accumulated relevance score is covered by the top-ranked tokens. Each curve represents the mean trend over eight datasets, and the shaded region shows the variance across datasets.}
    \label{fig:ablation_teaser}
\end{figure*}

\subsection{\name{} Implementation Details}
\label{sec:appendix_context_harness_details}

The \name{} implementation is activated by setting \texttt{--decoding\_method Our}. In this mode, the evaluation driver calls the sentence-replay path in \texttt{rescale\_generate}: the model receives the original prompt, computes candidate evidence proposals from attention, reconstructs a replay prompt, and then generates from the replay prompt. The same custom model classes also support attention-rescaling baselines, but \name{} uses the selected tokens to construct replayed evidence rather than to directly rescale attention logits during final decoding.\looseness-1

\paragraph{Prompt segmentation.}
For each dataset, the implementation identifies the boundary between the long-context portion and the question or answer-format suffix. Let $L_C$ be the number of tokens before this boundary. The original prompt is split into context tokens $x_{1:L_C}$ and question-side tokens $x_{L_C+1:T}$. The default replay prompt is formed as context, replayed evidence, and question-side tokens. The code also contains alternative replay positions for ablations, but the multi-round setting uses the before-question replay form.

\paragraph{Attention readout.}
The readout uses the final $w$ prompt tokens as cue tokens, where $w$ is controlled by \texttt{context\_warmup\_steps} and is set to 8 in our main runs. For each cue token, attention weights are averaged over a fixed set of selected layer-head pairs. Scores are accumulated across cue positions with exponential decay:
\begin{equation}
    r^{(t)} =
    \mathrm{Normalize}\left(a^{(t)} + \lambda r^{(t-1)}\right),
\end{equation}
where $a^{(t)}$ is the current averaged attention distribution and $\lambda$ is the decay factor. The configuration files set $\lambda=0.75$. Chat-template tokens are masked by default. In the main setting, candidate copied spans are restricted to the original context. The token-source ablation additionally allows candidate positions over the full replay prompt; in both settings, the replayed scaffold conditions the model state from which query-token attention is read out.

\paragraph{Token selection and sentence recovery.}
After scoring, the implementation applies top-$K$, top-$p$, or hybrid selection to obtain token positions. These selection rules follow prior token-importance methods and are not specific to \name{}. The method then decodes the prompt, finds sentence boundaries, and copies the sentence containing each selected token. Empty strings and trailing special markers are removed. When multiple selected tokens fall in the same sentence, the sentence is kept once.\looseness-1

\paragraph{Recursive evidence sifting.}
For $R$ replay rounds, the implementation repeats evidence proposal and sentence recovery on the current replay prompt. Newly recovered sentences are deduplicated against previously inserted sentences, accumulated into a single ordered scaffold, and inserted back between the original context and the question-side prompt. The final answer is generated only after the last replay round. The main scripts use $R=2$ with sentence wrapping disabled, so the inserted scaffold is a compact list of copied evidence sentences.\looseness-1

\paragraph{Caching and length accounting.}
To avoid unnecessary recomputation, the implementation snapshots the key-value cache at the end of the original context. During replay generation, it restores this context cache and processes only the inserted evidence plus the question-side tokens before decoding the answer. The generation length budget is extended by the number of inserted replay tokens so that adding evidence does not reduce the maximum number of answer tokens.

\section{Experimental Settings}
\label{sec:appendix_settings}

We use the same backbone model, context budget, prompting format, and answer decoding settings for \name{} and the corresponding baselines within each comparison. For Qwen3 models evaluated beyond their native context window, we enable YaRN rope scaling. The main benchmark comparison and most ablations use 128K contexts; the shorter-context robustness study in Table~\ref{tab:robustness_64k_ablation} uses 64K contexts. Thinking mode is disabled unless explicitly stated, with Table~\ref{tab:thinking_ablation} serving as the thinking-enabled robustness setting. All task scores in the experimental tables are reported as fractions in $[0,1]$, while average rank in Table~\ref{tab:main_results} remains on the original rank scale.\looseness-1

Dataset-specific maximum answer lengths and stopping behavior follow the evaluation scripts in the released code. The appendix ablations below isolate two hyperparameters on 128K NQ, PopQA, and InfBench MC: Table~\ref{tab:evidence_turn_ablation} varies the number of replay rounds $R$, and Table~\ref{tab:topk_ablation} varies the top-$K$ evidence-token candidate budget while keeping the other generation settings fixed.

\input{tables/evidence_turn_ablation}
\input{tables/topk_ablation}

\section{Additional Visualizations}
\label{sec:appendix_visualizations}

The extended robustness results are reported numerically in Tables~\ref{tab:thinking_ablation} and~\ref{tab:robustness_64k_ablation}. These tables complement the hyperparameter visualizations in Figure~\ref{fig:ablation_hyperparameters}: Table~\ref{tab:thinking_ablation} evaluates whether evidence replay remains useful when thinking is enabled at 128K context length, while Table~\ref{tab:robustness_64k_ablation} checks whether the same behavior holds under a shorter 64K context budget.

\section{Theoretical Analysis}

\subsection{Theoretical Setup}\label{app:thm:setup}

Following prior works  \cite{nichani2025understanding,olsson2022context}, we formulate the long-context task as follows. Suppose that each token $i=1,\dots,n$ in the context has mutually orthogonal embedding $c_i\in\mathbb R^d$ with $\|c_i\|_2=1$ and that the answer has embedding $y\in\mathbb R^d$. Initially, the prompt sequence is $x^{(0)}=[c_1,\dots,c_n]$. Let $q\in\mathbb R^d$ denote the query embedding. The initial attention scores are:
\begin{align}
a^{(0)}=\operatorname{softmax}\big([\langle x^{(0)}_i, q\rangle]_{i=1}^{n}\big).
\end{align}
The initial hidden embedding $h^{(0)}$ is:
\begin{align}
h^{(0)}=\sum_{i=1}^na^{(0)}_ix^{(0)}_i.
\end{align}
In each step $j\ge1$, we first append the most relevant evidence to the sequence:
\begin{align}
x^{(j)}=\big[x^{(j-1)},x^{(j-1)}_{\operatorname{argmax}(a^{(j-1)})}\big].
\end{align}
Then, we update the attention scores:
\begin{align}
a^{(j)}=\operatorname{softmax}\big([\langle x^{(j)}_i, h^{(j-1)}\rangle]_{i=1}^{n+j}\big).
\end{align}
Finally, we update the hidden embedding:
\begin{align}
h^{(j)}=\sum_{i=1}^{n+j}a^{(j)}_ix^{(j)}_i.
\end{align}
We assume that the context is relevant to the answer $y$: there exists $i^*$ such that $y=c_{i^*}$. We also assume that the query $q$ is relevant to the answer $y$: $\langle y,q\rangle>\langle c_i,q\rangle$ for all $i\ne i^*$ and $\max_{i\ne i^*}\frac{\langle y-c_i,q\rangle}{a^{(0)}_{i^*}-a^{(0)}_{i}}<1$.

\subsection{Proof of Theorem~\ref{thm:similarity}}\label{app:thm:proof}

\begin{proof}
To prove that the cosine similarity $\cos(h^{(j)}, y)$ strictly increases with each step $j \ge 1$, we will trace the evolution of the attention weights for each token.


W.l.o.g., suppose that $i^*=1$, so $y = c_1$. The initial sequence is $x^{(0)} = [c_1, \dots, c_n]$.
Let $w^{(j)}_i$ denote the sum of the attention weights of all copies of the token $c_i$ in the sequence $x^{(j)}$ when computing $h^{(j)}$. Thus, we can express the hidden embedding at any step $j$ as:
\begin{align}
h^{(j)} = \sum_{i=1}^n w^{(j)}_i c_i.
\end{align}
Since the embeddings $c_i$ are mutually orthogonal and have unit norm ($\|c_i\|_2 = 1$), the cosine similarity between $h^{(j)}$ and $y = c_1$ is:
\begin{align}
\cos(h^{(j)}, y) &= \frac{\langle h^{(j)}, c_1 \rangle}{\|h^{(j)}\|_2 \|c_1\|_2} \\&= \frac{w^{(j)}_1}{\sqrt{\sum_{m=1}^n (w^{(j)}_m)^2}} \\&= \left( 1 + \sum_{i \ne 1} \left( \frac{w^{(j)}_i}{w^{(j)}_1} \right)^2 \right)^{-1/2}.
\end{align}
Let $R^{(j)}_i = \frac{w^{(j)}_1}{w^{(j)}_i}$. To prove that $\cos(h^{(j)}, y) > \cos(h^{(j-1)}, y)$, it is sufficient to prove that the ratio $R^{(j)}_i > R^{(j-1)}_i$ for all $i \ne 1$ and for all $j \ge 1$. We will use induction to prove this.


At $j=0$, the attention weights are $w^{(0)}_i = a^{(0)}_i$.
The initial ratio for the base state (as dictated by the dot product) can be defined as $R^{(0)}_i = e^{\langle c_1 - c_i, q \rangle}$.
Since $\langle y - c_i, q \rangle < a^{(0)}_1 - a^{(0)}_i$, then
\begin{align}
\ln R^{(0)}_i < w^{(0)}_1 - w^{(0)}_i.
\end{align}
We are also given that $\langle y, q \rangle > \langle c_i, q \rangle$ for all $i \ne 1$, meaning $a^{(0)}_1 > a^{(0)}_i$. Thus, the most relevant evidence token at step 0 is $c_1$, and $x^{(1)}$ appends $c_1$.
At step 1, the sequence has $2$ copies of $c_1$ and $1$ copy of $c_i$. The unnormalized attention score for each copy of $c_k$ is $e^{\langle c_k, h^{(0)} \rangle} = e^{w^{(0)}_k}$. Normalizing these gives:
\begin{align}
w^{(1)}_1 = \frac{2 e^{w^{(0)}_1}}{Z_1}, \quad w^{(1)}_i = \frac{e^{w^{(0)}_i}}{Z_1},
\end{align}
where $Z_j=\sum_{i=1}^{n}w^{(j)}_i$ is the denominator of softmax in attention scores.
Evaluating the ratio $R^{(1)}_i$:
\begin{align}
R^{(1)}_i = \frac{w^{(1)}_1}{w^{(1)}_i} = 2 e^{w^{(0)}_1 - w^{(0)}_i}.
\end{align}
We want to show $R^{(1)}_i > R^{(0)}_i$. Substituting our assumption $\ln R^{(0)}_i < w^{(0)}_1 - w^{(0)}_i$:
\begin{align}
R^{(1)}_i = 2 e^{w^{(0)}_1 - w^{(0)}_i} > 2 e^{\ln R^{(0)}_i} = 2 R^{(0)}_i > R^{(0)}_i.
\end{align}
Thus, $R^{(1)}_i > R^{(0)}_i$, which implies $\cos(h^{(1)}, y) > \cos(h^{(0)}, y)$.


Assume for step $j-1$ that $R^{(j-1)}_i > R^{(j-2)}_i > \dots > R^{(0)}_i > 1$.
Because $R^{(j-1)}_i > 1$, we have $w^{(j-1)}_1 > w^{(j-1)}_i$. Therefore, individual tokens $c_1$ continue to command the highest attention scores, meaning $c_1$ is consistently appended. At step $j$, there are $N^{(j)}_1 = j+1$ copies of $c_1$ and $1$ copy of each $c_i$.

Let $\Delta^{(j)}_i = w^{(j)}_1 - w^{(j)}_i$. The weights update according to:
\begin{align}
w^{(j)}_1 = \frac{(j+1) e^{w^{(j-1)}_1}}{Z_j}, \quad w^{(j)}_i = \frac{e^{w^{(j-1)}_i}}{Z_j}.
\end{align}
Dividing the two yields the recurrence relation for $R$:
\begin{align}
R^{(j)}_i = (j+1) e^{\Delta^{(j-1)}_i}.
\end{align}
We want to prove $R^{(j)}_i > R^{(j-1)}_i$. Substituting the recurrence for $R^{(j-1)}_i = j e^{\Delta^{(j-2)}_i}$, the condition becomes:
\begin{align}
(j+1) e^{\Delta^{(j-1)}_i} > j e^{\Delta^{(j-2)}_i},
\end{align}
which is equivalent to:
\begin{align}
\Delta^{(j-1)}_i > \Delta^{(j-2)}_i - \ln\left(1 + \frac{1}{j}\right).
\end{align}
We will in fact prove a strictly stronger statement: $\Delta^{(j)}_i > \Delta^{(j-1)}_i$ for all $j \ge 1$.

By writing $w^{(j)}_1$ and $w^{(j)}_i$ explicitly using the sum $Z_j$, we can express $\Delta^{(j)}_i$ as:
\begin{align}
\Delta^{(j)}_i = \frac{j+1 - e^{-\Delta^{(j-1)}_i}}{j+1 + \Sigma^{(j-1)}},
\end{align}
where $\Sigma^{(j-1)} = \sum_{m \ne 1} e^{-\Delta^{(j-1)}_m}$.
Let $u = \Delta^{(j-1)}_i$ and $v = \Delta^{(j-2)}_i$. We evaluate the difference $u - \Delta^{(j)}_i$:
\begin{align}
&u - \Delta^{(j)}_i = u - \frac{j+1 - e^{-u}}{j+1 + \Sigma^{(j-1)}} \\={}& \frac{u(j+1 + \Sigma^{(j-1)}) - (j+1) + e^{-u}}{j+1 + \Sigma^{(j-1)}}.
\end{align}
Let's analyze the numerator $u(j+1 + \Sigma^{(j-1)}) - (j+1) + e^{-u}$. From our previous step, $u$ is formulated as $u = \frac{j - e^{-v}}{j + \Sigma^{(j-2)}}$.
By the inductive hypothesis $\Delta^{(j-1)}_m > \Delta^{(j-2)}_m - \ln(1 + \frac{1}{j})$, we have $e^{-\Delta^{(j-1)}_m} < \frac{j+1}{j} e^{-\Delta^{(j-2)}_m}$.
Summing this over all $m \ne 1$ gives bounds on $\Sigma$:
\begin{align}
\Sigma^{(j-1)} < \frac{j+1}{j} \Sigma^{(j-2)}.
\end{align}
Using this upper bound, we can bound the term $u(j+1+\Sigma^{(j-1)})$ in the numerator:
\begin{align}
&u(j+1 + \Sigma^{(j-1)}) \\
<{}& \left( \frac{j - e^{-v}}{j + \Sigma^{(j-2)}} \right) \left( j+1 + \frac{j+1}{j} \Sigma^{(j-2)} \right) \\
={}& \left( \frac{j - e^{-v}}{j + \Sigma^{(j-2)}} \right) \frac{j+1}{j} \left( j + \Sigma^{(j-2)} \right) \\
={}& \frac{j+1}{j} (j - e^{-v}) = j+1 - \frac{j+1}{j} e^{-v}.
\end{align}
Substituting this back into the numerator of our difference expression, we have that the numerator is smaller than:
\begin{align}
<{}&\left( j+1 - \frac{j+1}{j} e^{-v} \right) - (j+1) + e^{-u} \\
={}& e^{-u} - \frac{j+1}{j} e^{-v}.
\end{align}
Because we assumed $\Delta^{(j-1)}_i > \Delta^{(j-2)}_i - \ln(1+\frac{1}{j})$, it is an algebraic consequence that $e^{-u} < \frac{j+1}{j} e^{-v}$. Therefore, the numerator is $<0$, which implies $u - \Delta^{(j)}_i<0$. It follows that
\begin{align}
\Delta^{(j)}_i > \Delta^{(j-1)}_i.
\end{align}


Because $\Delta^{(j)}_i$ is strictly increasing, it easily satisfies the required bound $\Delta^{(j)}_i > \Delta^{(j-1)}_i - \ln(1 + \frac{1}{j+1})$. This guarantees that:
\begin{align}
R^{(j+1)}_i > R^{(j)}_i.
\end{align}
Since the ratio of the correct answer's weight to every incorrect answer's weight strictly increases at every step $j$, the relative mass of $w^{(j)}_1$ continuously approaches $1$. Consequently, the denominator in the cosine similarity formula strictly shrinks, yielding:\looseness-1
\begin{align}
&\cos(h^{(j)}, y) > \cos(h^{(j-1)}, y), \quad \forall j \ge 1.\qedhere
\end{align}
\end{proof}

%% file: tables/efficiency_analysis.tex
\begin{table}[t]
\centering
\small
\caption{Runtime usage on CLIPPER using Llama3-8B at 128K context length.}
\label{tab:efficiency_analysis}
\setlength{\tabcolsep}{5pt}
\resizebox{0.8\linewidth}{!}{
\begin{tabular}{lc}
\toprule
\textbf{Method} & \textbf{Runtime} \\ 
\midrule
Vanilla & 44 min \\ 
AttnSharp & 46 min \\ 
\shortstack[l]{DAC} & 34 min \\ 
A-MEM & 50 min \\ 
DySCO & 2h 13min \\ 
\textbf{\name{}} & 62 min \\ 
\bottomrule
\end{tabular}
}
\end{table}

%% file: tables/evidence_turn_ablation.tex
\begin{table}[t]
\centering
\small
\caption{Ablation on recursive evidence-sifting rounds. Multiple rounds substantially improve over a single round, while the best depth varies by dataset.}
\label{tab:evidence_turn_ablation}
\renewcommand{\arraystretch}{1.22}
\setlength{\tabcolsep}{3.5pt}
\begin{tabular}{lccccc}
\toprule
\multirow{2}{*}{\textbf{$R$ Rounds}} &
\multicolumn{2}{c}{\shortstack{\textbf{NQ}\\\textbf{128K}}} &
\multicolumn{2}{c}{\shortstack{\textbf{PopQA}\\\textbf{128K}}} &
\shortstack{\textbf{InfMC}\\\textbf{128K}} \\
\cmidrule(lr){2-3}
\cmidrule(lr){4-5}
\cmidrule(l){6-6}
& Acc & F1 & Acc & F1 & Acc \\
\midrule
1 & 0.04 & 0.21 & 0.01 & 0.10 & 0.48 \\
2 & \underline{0.08} & \underline{0.25} & \textbf{0.07} & \textbf{0.19} & 0.50 \\
3 & \textbf{0.09} & \underline{0.25} & \underline{0.05} & \underline{0.18} & \underline{0.51} \\
4 & \textbf{0.09} & \textbf{0.25} & \underline{0.05} & 0.17 & \textbf{0.54} \\
\bottomrule
\end{tabular}
\end{table}

%% file: tables/topk_ablation.tex
\begin{table}[t]
\centering
\small
\caption{Ablation on the evidence candidate budget. Larger candidate sets improve PopQA and InfMC, but can hurt NQ, revealing a recall--noise trade-off.}
\label{tab:topk_ablation}
\setlength{\tabcolsep}{3.5pt}
\renewcommand{\arraystretch}{1.22}
\begin{tabular}{lccccc}
\toprule
\multirow{2}{*}{\textbf{Top-$K$}} &
\multicolumn{2}{c}{\shortstack{\textbf{NQ}\\\textbf{128K}}} &
\multicolumn{2}{c}{\shortstack{\textbf{PopQA}\\\textbf{128K}}} &
\shortstack{\textbf{InfMC}\\\textbf{128K}} \\
\cmidrule(lr){2-3}
\cmidrule(lr){4-5}
\cmidrule(l){6-6}
& Acc & F1 & Acc & F1 & Acc \\
\midrule
1 & 0.03 & 0.22 & 0.04 & 0.14 & 0.52 \\
8 & \textbf{0.08} & \textbf{0.25} & \underline{0.07} & \underline{0.19} & 0.50 \\
16 & \textbf{0.08} & \underline{0.25} & 0.05 & 0.17 & \underline{0.55} \\
32 & \underline{0.04} & 0.23 & \textbf{0.10} & \textbf{0.21} & \textbf{0.58} \\
\bottomrule
\end{tabular}
\end{table}